\documentclass[sigconf,nosort]{acmart}
\setcopyright{none}

\usepackage{amsmath}
\usepackage{amsfonts}
\usepackage{algorithm}
\usepackage{algorithmic}
\usepackage{booktabs}
\usepackage{colortbl}
\usepackage{graphicx}
\usepackage{makecell}
\usepackage{multirow}
\usepackage{tabularx}
\usepackage[table]{xcolor}
\usepackage{xspace}

\graphicspath{{figures/}}

\definecolor{lightgray2}{gray}{0.85}

\definecolor{myred}{RGB}{237,28,80}
\definecolor{scarlet}{RGB}{255,36,0}
\definecolor{keywordred}{RGB}{200,50,60}
\definecolor{keywordgreen}{RGB}{0,150,80}
\definecolor{iceblue}{RGB}{214, 230, 245}
\definecolor{mintcream}{RGB}{240, 255, 250}
\definecolor{pastelyellow}{RGB}{254, 240, 158}
\definecolor{creamyellow}{RGB}{255,246,213}

\newcommand{\ourmethod}{LiSCP\xspace}
\newcommand{\ourmethodTT}{\textit{\textbf{\fontfamily{lmtt}\selectfont \ourmethod}}\xspace}

\newcommand{\GPTThreeFiveTurbo}{GPT-3.5-Turbo\xspace}

\newcommand{\datasetnews}{\textit{Reuter News}\xspace}
\newcommand{\datasetcode}{\textit{HumanEval Code}\xspace}
\newcommand{\datasetessay}{\textit{Student Essay}\xspace}
\newcommand{\datasetpaper}{\textit{Paper Abstract}\xspace}
\newcommand{\datasetreview}{\textit{Yelp Review}\xspace}
\newcommand{\datasetvisualnews}{\textit{VisualNews}\xspace}
\newcommand{\datasetmmimdb}{\textit{MM-IMDb}\xspace}

\newtheorem{definition}{Definition}

\newtheorem{theorem}{Theorem}

\copyrightyear{2026}
\acmYear{2026}

\acmConference[MM '26]{the 34th ACM International Conference on Multimedia}{November 10--14, 2026}{Rio de Janeiro, Brazil}
\acmDOI{}
\acmISBN{}

\renewcommand\footnotetextcopyrightpermission[1]{} % 去掉版权脚注
\begin{document}

\title[Lightweight Stylistic Consistency Profiling for LLM-Generated Text Detection]{Lightweight Stylistic Consistency Profiling: Robust Detection of LLM-Generated Textual Content for Multimedia Moderation}

\author{Siyuan Li}
\affiliation{
  \institution{School of Computer Science, Shanghai Jiao Tong University}
  \city{Shanghai}
  \country{China}
}
\email{siyuanli@sjtu.edu.cn}

\author{Aodu Wulianghai}
\affiliation{
  \institution{School of Computer Science, Shanghai Jiao Tong University}
  \city{Shanghai}
  \country{China}
}
\email{melusine.wlhad@sjtu.edu.cn}

\author{Xi Lin}
\affiliation{
  \institution{School of Computer Science, Shanghai Jiao Tong University}
  \city{Shanghai}
  \country{China}
}
\email{linxi234@sjtu.edu.cn}

\author{Xibin Yuan}
\affiliation{
  \institution{School of Computer Science, Shanghai Jiao Tong University}
  \city{Shanghai}
  \country{China}
}
\email{2022yxb@sjtu.edu.cn}

\author{Qinghua Mao}
\affiliation{
  \institution{School of Computer Science, Shanghai Jiao Tong University}
  \city{Shanghai}
  \country{China}
}
\email{mmmm2018@sjtu.edu.cn}

\author{Guangyan Li}
\affiliation{
  \institution{Institute of Automation, Chinese Academy of Sciences}
  \city{Beijing}
  \country{China}
}
\email{liguangyan2022@ia.ac.cn}

\author{Xiang Chen}
\affiliation{
  \institution{College of Computer Science and Technology, Zhejiang University}
  \city{Hangzhou}
  \country{China}
}
\email{wasdnsxchen@gmail.com}

\author{Jun Wu}
\affiliation{
  \institution{School of Computer Science, Shanghai Jiao Tong University}
  \city{Shanghai}
  \country{China}
}
\email{junwuhn@sjtu.edu.cn}

\author{Jianhua Li}
\affiliation{
  \institution{School of Computer Science, Shanghai Jiao Tong University}
  \city{Shanghai}
  \country{China}
}
\email{lijh888@sjtu.edu.cn}

\begin{CCSXML}
<ccs2012>
   <concept>
       <concept_id>10010147.10010178</concept_id>
       <concept_desc>Computing methodologies~Artificial intelligence</concept_desc>
       <concept_significance>500</concept_significance>
       </concept>
   <concept>
       <concept_id>10010147.10010257</concept_id>
       <concept_desc>Computing methodologies~Machine learning</concept_desc>
       <concept_significance>500</concept_significance>
       </concept>
 </ccs2012>
\end{CCSXML}

\ccsdesc[500]{Computing methodologies~Artificial intelligence}
\ccsdesc[500]{Computing methodologies~Machine learning}

\keywords{LLM-generated content detection, Multimedia content Moderation, Stylistic consistency profiling}

\begin{abstract}
The increasing prevalence of Large Language Models (LLMs) in content creation has made distinguishing human-written textual content from LLM-generated counterparts a critical task for multimedia moderation. 
Existing detectors often rely on statistical cues or model-specific heuristics, making them vulnerable to paraphrasing and adversarial manipulations, and consequently limiting their robustness and interpretability.
In this work, we propose \ourmethodTT, a novel lightweight stylistic consistency profiling method for robust detection of LLM-generated textual content, focusing on feature stability under adversarial manipulation. 
Our approach constructs a consistency profile that combines discrete stylistic features with continuous semantic signals, leveraging stylistic stability across multimodal-guided paraphrased text variants.
Experiments spanning real-world multimedia news and movie datasets and conventional text domains demonstrate that \ourmethod achieves superior performance on in-domain detection and outperforms existing approaches by up to 11.79\% in cross-domain settings. 
Additionally, it demonstrates notable robustness under adversarial scenarios, including adversarial attacks and hybrid human-AI settings.
\end{abstract}

\maketitle
\thispagestyle{empty}
\pagestyle{empty}     

\section{Introduction}
Large Language Models (LLMs) have become a default tool for open-domain content creation, enabling high-quality generation across various domains such as news articles, product reviews, academic essays, technical documentation, and even multimedia-associated textual content (e.g., image captions, video subtitles, and multimodal platform reviews)~\cite{dubey2024llama, chen2026agentchain, su2025large, hurst2024gpt, li2024trustworthy, xu2025advancements}.
Widely deployed in both everyday tools and professional workflows, these models increasingly blur the distinction between human-written and machine-generated textual content, raising serious concerns regarding content authenticity, attribution, and accountability, which are particularly acute in multimedia content moderation scenarios where text often interacts with visual or audio modalities~\cite{sadasivan2025can, wu2025survey, yu2025evobench, li2026dsipa, arevalo2017gated, nguyen2025mtikguard}.
Consequently, the reliable detection of LLM-generated textual content has become critical for applications including academic integrity enforcement, auditing of high-stakes decisions, maintaining trust in digital communication, and ensuring the credibility of multimedia~\cite{li2025prdetect, cao2025practical}.

Despite significant advances, robust detection under realistic conditions—especially in multimedia-derived scenarios—remains an open challenge. 
Current detectors often rely on token-level statistics (e.g., likelihood- or rank-based features) or model-specific heuristics~\cite{hans2024spotting, GLTR:gehrmann2019gltr, li2026model, koike2025exagpt, abdelnabi2021adversarial, kirchenbauer2023watermark}, which tend to degrade when the generative model changes, the domain shifts, the text undergoes post-editing, or the textual content is adjusted to align with accompanying visual elements (e.g., edited image captions for misinformation propagation)~\cite{chen2025online, cheng2025beyond, zhou2025adadetectgpt, guo2023survey, Retrieval:krishna2024paraphrasing}. 
Although recent paraphrase- or re-query-based approaches reduce the need for supervision, they frequently treat paraphrasing merely as a means of score aggregation and remain tightly coupled to specific models or prompting strategies. This leads to unpredictable performance under distribution shifts, stylistic variations, hybrid human–AI compositions, or multimodal context changes (e.g., text reused across unrelated images)~\cite{lei2025pald, bao2025glimpse, zhang2024llm}. 
Moreover, most existing methods evaluate a text as a monolithic unit, offering limited insight into how stylistic patterns behave under meaning-preserving manipulations—an issue that is exacerbated when text is part of a broader multimedia ecosystem requiring cross-modal consistency~\cite{chen2025online,bao2025glimpse}.

To focus on the above problems, this work is guided by the following key research questions (RQs):
\begin{itemize}
    \item \textit{\textbf{RQ1:} How to design a detection framework that remains robust under adversarial manipulations without relying on heavyweight models?}
    \item \textit{\textbf{RQ2:} How to enhance the detection generalization across domains and multimedia-derived textual scenarios, especially when statistical features become unreliable under semantic-preserving transformations?}
\end{itemize}
In response to these two questions, we propose \ourmethodTT, a \textit{lightweight stylistic consistency profiling} method for LLM-generated textual content detection. 
Our method builds on a simple yet powerful principle: \textit{LLM authorship can be inferred from the stability of a text's stylistic patterns under meaning-preserving manipulation}. 
Instead of depending on a single text instance or aggregated detection scores, \ourmethod explicitly profiles stylistic behavior by generating multiple multimodal-aligned paraphrased variants of the input (ensuring consistency with accompanying multimodal context) and measuring consistency across them. 
Specifically, our method constructs a stylistic consistency profile that integrates: (i) \textit{discrete stylistic consistency signals} that capture surface-level invariances, and (ii) \textit{continuous semantic signals} that quantify semantic alignment across variants, including implicit alignment with the underlying multimodal context.
This profiling perspective shifts the focus from fragile, wording-specific artifacts to stability patterns that persist under paraphrasing and multimodal context adaptations, thereby aiming to address \textit{RQ2}.

To answer \textit{RQ1}, \ourmethod is designed to be both lightweight and robust in real-world detection deployment. 
Final decisions are derived from a compact consistency profile, rather than large end-to-end models or detector-specific heuristics, which improves efficiency and reduces dependence on any particular generator. 
By aggregating stability signals over meaning-preserving variants, \ourmethod emphasizes transformation-consistent signals that are difficult to remove through post-editing or rewriting.
In summary, this work makes the following contributions:
\begin{itemize}
    \item \textbf{Lightweight style profiling framework.} We propose the \ourmethodTT, a framework that profiles stylistic consistency by aggregating stability signals across multimodal-aligned paraphrased variants, robust against post-edits, model shifts.
    \item \textbf{Multi-level stylistic-semantic integration.} In this work, detection is reformulated as \textit{stylistic consistency inference under manipulation}. Our profile combines discrete stylistic and continuous semantic signals for stable patterns to enhance robustness and generalization across diverse scenarios.
    \item \textbf{Empirical validation across challenging settings.} Across diverse domains and real-world multimedia scenarios (e.g., image-text pair verification, video subtitle authentication), \ourmethod achieves state-of-the-art performance and exhibits strong robustness against adversarial attacks and hybrid human-AI compositions.
\end{itemize}

\section{Related Works}
\paragraph{Statistical and Model-Based Detectors.}
Early efforts on machine-generated text detection rely on statistical irregularities between human-written and machine-generated content, such as perplexity, entropy, or likelihood-based measures~\cite{lavergne2008detecting,hashimoto2019unifying,GLTR:gehrmann2019gltr}.
These approaches identify anomalies at the token or sequence level, forming the foundation of many modern detectors.
However, as LLMs have become increasingly fluent, such surface-level statistics are often insufficient to capture the subtle stylistic patterns exhibited by contemporary machine-generated text~\cite{jawahar2020automatic}.
Notably, in multimedia content scenarios, e.g., image-text pairs, video subtitles, and multimodal reviews, these statistical methods face additional challenges: they fail to leverage cross-modal semantic alignment cues and often degrade when text is paraphrased to adapt to multimedia context~\cite{yu2025eve, zhang2025asap, sadanandan2026psf}.
More recent work has explored supervised training-based detectors, typically fine-tuning large models to distinguish human-written and machine-generated textual content~\cite{yang2024survey}.
Representative systems such as GPTZero and OpenAI’s classifier train RoBERTa-style models on labeled corpora to learn discriminative representations~\cite{GPTZero,OpenAI-CLS2019}. 
While effective in controlled settings, these detectors frequently suffer from domain shift, limited cross-model generalization, and sensitivity to post-editing or rewriting~\cite{hu2024radar,RAIDAR:mao2024detecting,dai2026hlpd}. 
In contrast, our work avoids reliance on heavyweight classifiers or task-specific supervision, instead focusing on stylistic stability and multimodal-guided semantic consistency.

\paragraph{Paraphrase and Perturbation-Based Detection.} 
To mitigate overfitting to a single text instance, several methods leverage perturbations or paraphrasing to improve robustness. 
DetectGPT~\cite{mitchell2023detectgpt} exploits likelihood curvature by comparing model scores before and after perturbations, based on the observation that LLM-generated text often lies near local likelihood maxima and becomes unstable under controlled rewrites. 
Fast-DetectGPT~\cite{bao2024fast} improves efficiency through a more lightweight perturbation routine. %, while retaining similar performance.
Other approaches, such as Binoculars~\cite{hans2024spotting} and BiScope~\cite{guo2024biscope}, contrast scores across different language models to enhance generalization beyond a single generator.
Despite their effectiveness, most perturbation-based methods treat paraphrasing as a mechanism for score aggregation rather than a signal in its own right~\cite{shportko2025paraphrasing,mao2025learning,li2025styledecipher,li2025prdetect}. 
As a result, they continue to rely on global likelihood statistics and offer limited insight into fine-grained stylistic properties, especially in multimedia scenarios where text stylistic patterns are often constrained by paired visual content~\cite{cao2025practical}. 
Our work differs fundamentally by explicitly modeling stylistic consistency across multimodal-guided paraphrased variants, shifting the focus from scores to stability patterns that persist under adversarial manipulation and multimedia context adaptation.

\paragraph{Robust Detection in Real-World Settings.} 
Recent studies have increasingly emphasized real-world constraints such as hybrid human-AI authorship, partial access to proprietary models, and streaming text scenarios. 
PALD~\cite{lei2025pald} estimates the proportion of machine-generated content at the sentence level, enabling partial authorship analysis in mixed documents. 
GLIMPSE~\cite{bao2025glimpse} bridges white-box and black-box settings by reconstructing probability distributions from limited observations, improving robustness across proprietary models. 
Additional work explores non-parametric distribution comparison~\cite{R-Detect:song2025deep} or sequential hypothesis testing for online detection~\cite{chen2025online}. 
Existing robust detection methods improve applicability, but they often output a single global score, remain sensitive to paraphrasing or light editing, and overlook cross-modal semantic constraints.
In contrast, our method explicitly profiles stylistic behavior under meaning-preserving manipulations and multimodal alignment, enabling robust detection across domains and adversarial settings without model-specific assumptions.

\section{Lightweight Stylistic Consistency Profiling for LLM-Generated Content Detection}
In this section, we formally develop our lightweight framework for detecting LLM-generated textual content,
a critical task in multimedia content moderation (e.g., verifying text authenticity in image-text reviews, video subtitles, and multimodal academic papers).
We first define the paraphrase-induced text space and stylistic stability,
then introduce a multi-level consistency profile derived from discrete and continuous feature mappings.
Based on this formulation, we present the detection rule and efficient algorithmic realization,
which can be seamlessly integrated into real-world multimedia moderation pipelines.

\subsection{Detection Problem Formulation and Paraphrase Space}
Let $\mathcal{X}$ denote the space of all texts (the core object of detection in multimedia content).
Given an input text $x \in \mathcal{X}$ (often paired with visual/audio content in multimedia scenarios),
we aim to predict its authorship label $y \in \{0,1\}$ (human vs. LLM-generated).

We leverage transformation stability to distinguish authorship.
Under a predefined prompt set \(\mathcal{P}\), a multimodal-guided paraphrasing operator \(M_I\) maps the input pair \((I,x)\) to a set of meaning-preserving rewrites:
\begin{equation}
    \mathcal{P}(x;I)=\{\hat{x}_k \mid \hat{x}_k = M_I(I,x,p_k),\; p_k\in\mathcal{P},\; k=1,\dots,K\}.
\end{equation}
We enforce semantic preservation by filtering out variants with semantic similarity to $x$ below a threshold $\delta$, ensuring rewrites remain consistent with the original text's core meaning, which is an essential property for text detection in multimedia content (e.g., avoiding off-topic rewrites in image captions).
Given the constructed paraphrase set $\{x\} \cup \mathcal{P}(x;I)$, the detection objective is to learn a stylistic consistency profile that captures invariant patterns across rewrites, enabling robust discrimination even in multimedia scenarios with noisy or manipulated text.

\subsection{Multi-Level Stylistic Consistency Profiling}
We formulate detection as an inference problem over transformation stability patterns.
Instead of analyzing a single text instance, we construct a structured profile that captures how stylistic signals behave across paraphrased variants.

\begin{definition}[Stylistic Consistency Profile]
    A stylistic consistency profile is an aggregated vector representation
    \begin{equation}
        v: \mathcal{X}\to\mathbb{R}^d,
    \end{equation}
    where \(v(x)\) is constructed from the stability measurements between \(x\) and its paraphrased variants \(\hat{x}\in \mathcal{P}(x; I)\).
\end{definition}
The profile $v(x)$ is constructed by integrating surface-level discrete consistency signals and semantic-level continuous consistency signals, as detailed below.

\begin{figure}[!t]
    \centering
    \includegraphics[width=\linewidth]{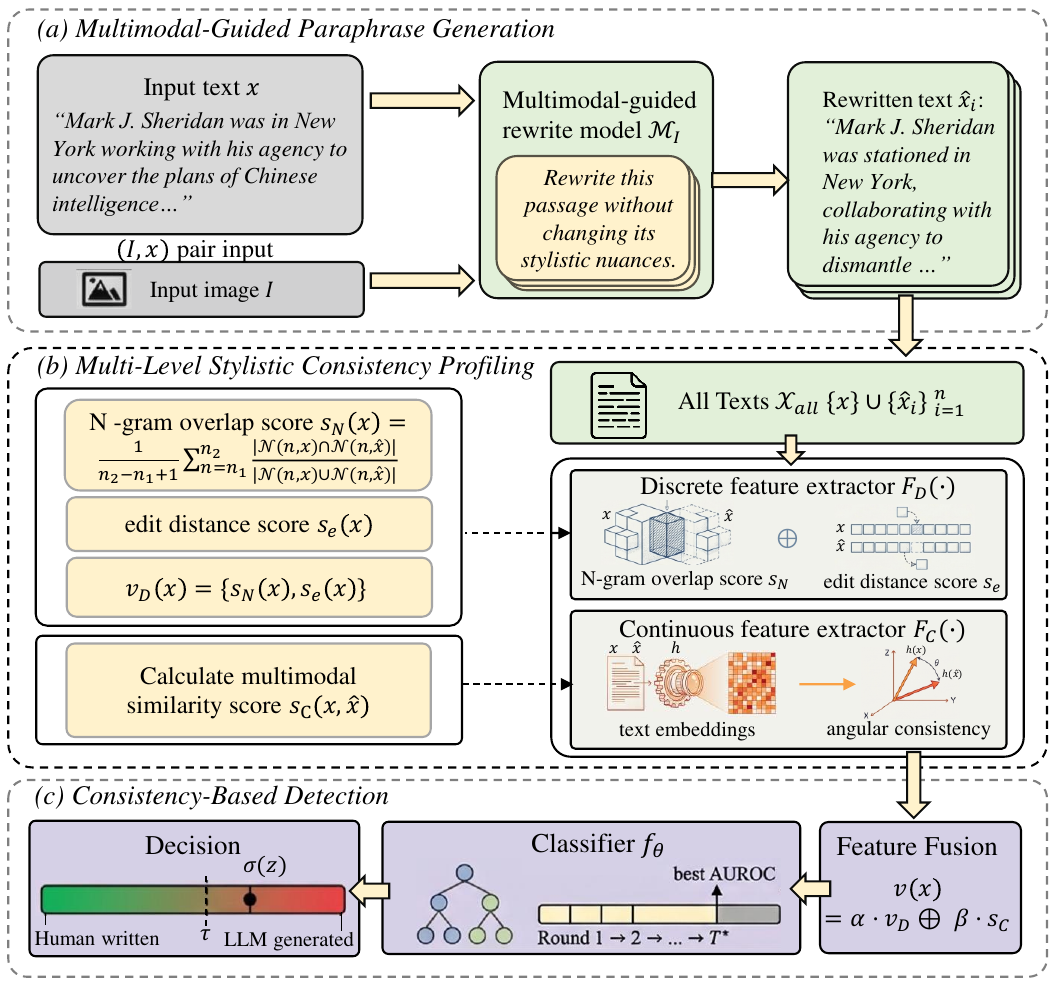}
    \caption{Overview of the \ourmethod. 
    \textbf{(a) Multimodal-Guided Paraphrase Generation:} Given an input pair \((I,x)\), a rewrite model $\mathcal{M}_I$ generates semantically consistent variants \(\{\hat{x}_i\}\). 
    \textbf{(b) Multi-Level Stylistic Consistency Profiling:} Discrete features \(s_D(x,\hat{x})\) and continuous features \(s_C(x,\hat{x})\) are extracted to capture linguistic stability. 
    \textbf{(c) Consistency-Based Detection:} Features are aggregated into \(v(x)\) and fed into a gradient-boosted classifier to predict whether \(x\) is LLM-generated.}
    \label{figure:pipeline}
\end{figure}

\paragraph{\textit{Surface-Level Discrete Consistency Profiling.}}
For the tokenized text $x$, its n-gram set is defined as $\mathcal{N}(n,x) = \{(w_i,\dots,w_{i+n-1})\}_{i=1}^{|x|-n+1}$.
The normalized n-gram stability across a range $[n_1,n_2]$ is:
\begin{equation}
\label{eq:ngram_stability}
    s_N(x,\hat{x}) =
    \frac{1}{n_2-n_1+1}
    \sum_{n=n_1}^{n_2}
    \frac{|\mathcal{N}(n,x) \cap \mathcal{N}(n,\hat{x})|}
         {|\mathcal{N}(n,x) \cup \mathcal{N}(n,\hat{x})|}.
\end{equation}
To further capture fine-grained lexical perturbations, we incorporate edit-based consistency.
Let $\mathcal{D}(x,\hat{x})$ denote the Levenshtein distance defined via dynamic programming.
We define the normalized edit stability as:
\begin{equation}
\label{eq:edit_stability_general}
    s_E(x,\hat{x}) =
    1 - \frac{\mathcal{D}(x,\hat{x})}{\max(|x|,|\hat{x}|)}.
\end{equation}
Furthermore, the discrete consistency vector between $x$ and $\hat{x}$ is $\mathbf{s}_D(x,\hat{x}) = \bigl[s_N(x,\hat{x}),\; s_E(x,\hat{x})\bigr]^\top$, where human content typically exhibits lower stability (flexible rewrites) and LLM-generated content exhibits higher stability (structural invariance)—a pattern that holds even in multimedia-derived texts.

\paragraph{\textit{Semantic-Level Continuous Consistency Profiling.}}
We capture continuous consistency using a shared text encoder \(\xi\). 
Given \(x\) and \(\hat{x}\), their contextual embeddings are \(h(x)=\mathrm{Pool}(\xi(x))\) and \(h(\hat{x})=\mathrm{Pool}(\xi(\hat{x}))\). 
We then compute a normalized angular consistency score
\begin{equation}
\label{eq:semantic_consistency}
    s_C(x,\hat{x}) = 1 - \frac{1}{\pi}\arccos\left(
    \frac{h(x)^\top h(\hat{x})}{\|h(x)\|_2 \|h(\hat{x})\|_2}
    \right).
\end{equation}
Although \(s_C(x,\hat{x})\) is defined over textual representations, both paraphrases \(\hat{x}\) and the resulting consistency profile are obtained under the paired image context \(I\) through the multimodal-guided rewrite process.
This measure captures semantic stability beyond surface-level wording in multimedia scenarios.

% \subsubsection{Profile Aggregation across Paraphrases}
Given the demands of real-time multimedia content moderation, this profile is engineered to be lightweight and compact, ensuring efficient inference. 
For the text $x$ with its paraphrase set $\mathcal{P}(x; I)$, the final consistency profile is derived by aggregating pairwise stylistic and semantic stability features:
\begin{equation}
\label{eq:profile_aggregation}
    v(x) =
    \frac{1}{|\mathcal{P}(x; I)|}
    \sum_{\hat{x} \in \mathcal{P}(x; I)}
    \Bigl[
    \alpha \cdot \mathbf{s}_D(x,\hat{x})
    \;\oplus\;
    \beta \cdot s_C(x, \hat{x})
    \Bigr],    
\end{equation}
where $\alpha,\beta > 0$ are scaling coefficients and $\oplus$ denotes the feature vector concatenation.

\begin{algorithm}[!tb]
    \caption{Multimodal-Guided Stylistic Consistency Detection}
    \label{alg:scp}
    % \small
    \begin{algorithmic}[1]
        \STATE \textbf{Input}: multimodal input pair \((I,x)\), multimodal paraphraser \(M_I\), prompt set \(\mathcal{P}\), encoder \(\xi\), classifier \(f_\theta\) \\
        \STATE \textbf{Parameters}: number of paraphrases \(K\), fusion weights \(\alpha,\beta\), decision threshold \(\tau\) \\
        \STATE \textbf{Output}: Predicted label $\hat{y}$           
        \STATE $\mathcal{X}_p \leftarrow \varnothing, \enspace S_N, S_E, S_C \leftarrow \varnothing$ 
        \FOR{$k = 1,\dots,K$}
            \STATE $\hat{x}_k \leftarrow M_I(I, x, p_k)$
            \STATE $\mathcal{X}_p \leftarrow \mathcal{X}_p \cup \{\hat{x}_k\}$
        \ENDFOR
        % \hfill \textcolor{keywordgreen}{ $\triangleright$ \textit{// Step 1: Meaning-preserving paraphrasing} }
        \FOR{each $\hat{x} \in \mathcal{X}_p$}
            \STATE $s_N \leftarrow \textsc{NgramStability}(x,\hat{x})$
            \STATE $S_N \leftarrow S_N \cup \{s_N\}$
            % \hfill \textcolor{keywordgreen}{ $\triangleright$ \textit{// Surface-level lexical stability} }
            \STATE $s_E \leftarrow \textsc{EditStability}(x,\hat{x})$
            \STATE $S_E \leftarrow S_E \cup \{s_E\}$
            % \hfill \textcolor{keywordgreen}{ $\triangleright$ \textit{// Surface-level edit stability} }
            \STATE $\mathbf{h}_x \leftarrow \xi(x), \enspace \mathbf{h}_{\hat{x}} \leftarrow \xi(\hat{x})$
            \STATE $s_C \leftarrow \textsc{SemanticConsistency}(\mathbf{h}_x, \mathbf{h}_{\hat{x}})$
            \STATE $S_C \leftarrow S_C \cup \{s_C\}$
            % \hfill \textcolor{keywordgreen}{ $\triangleright$ \textit{// Representation-level semantic stability} }
        \ENDFOR
        
        \STATE {\small $\bar{s}_N \leftarrow \frac{1}{|S_N|} \sum\limits_{s \in S_N}s$, $\bar{s}_E \leftarrow \frac{1}{|S_E|}\sum\limits_{s \in S_E}s$, $\bar{s}_C \leftarrow \frac{1}{|S_C|}\sum\limits_{s \in S_C}s$}
        \STATE $v_D(x) \leftarrow (\bar{s}_N, \bar{s}_E)$
        \STATE $v(x) \leftarrow \alpha \cdot v_D(x) \oplus \beta \cdot \bar{s}_C$
        % \hfill \textcolor{keywordgreen}{ $\triangleright$ \textit{// Construct consistency profile} }
        \STATE $z \leftarrow f_\theta(v(x))$, \enspace $\hat{y} \leftarrow \mathbb{I}[\sigma(z) \ge \tau] $
        \STATE \textbf{return} $\hat{y}$
    \end{algorithmic}
\end{algorithm}

\subsection{Consistency-based Detection Rule}
Let $v(x) \in \mathbb{R}^d$ denote the aggregated consistency profile.
We model the conditional likelihood of authorship via a parametric decision function $f_\theta : \mathbb{R}^d \rightarrow \mathbb{R}$.
The detection score is defined as: $z(x) = f_\theta(v(x))$, and the predicted label is obtained via thresholding: $\hat{y}=\mathbb{I}\!\left[\sigma(z(x))\ge\tau\right]$, where $\sigma(\cdot)$ is the sigmoid function and $\tau$ is a fixed decision threshold.
The theoretical guarantee for the separability is as follows:

\begin{theorem}[Expected Separation of Stability Features]
\label{theorem}
    To analyze the separability of human-written and LLM-generated textual content, let \(v(x) \in \mathbb{R}^d\) denote the stylistic consistency profile induced by an input text \(x\) and its multimodal-guided paraphrased variants. 
    Assume that there exists a constant \(\epsilon>0\) such that the class-conditional expectations satisfy
    \begin{equation}
        \mathbb{E}[v(x)\mid y=1] - \mathbb{E}[v(x)\mid y=0]\succeq \epsilon\,\mathbf{1},
    \end{equation}
    where \(\mathbf{1}\in\mathbb{R}^d\) denotes the all-ones vector, \(y\in\{0,1\}\) is the binary authorship label, and \(\succeq\) denotes element-wise inequality. 
    Then there exists a linear scoring function \(f_\theta(v)=\theta^\top v\) that achieves positive expected margin separation between the two classes.
\end{theorem}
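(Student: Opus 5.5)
The plan is to construct $\theta$ explicitly and check the expected margin by linearity of expectation. I will take $\theta = \mathbf{1}/\sqrt{d}$, or more generally any $\theta \succeq 0$ with $\theta \neq 0$. The quantity to control is the expected score gap
\begin{equation*}
\Delta(\theta) := \mathbb{E}[f_\theta(v(x))\mid y=1] - \mathbb{E}[f_\theta(v(x))\mid y=0].
\end{equation*}
First I will confirm that these expectations are finite. Every coordinate of $v(x)$ in Eq.~\eqref{eq:profile_aggregation} is an average of $s_N, s_E, s_C \in [0,1]$ scaled by $\alpha$ or $\beta$, so $v(x)$ is bounded and both class-conditional means exist. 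Since $f_\theta$ is linear, it commutes with conditional expectation, which gives
\begin{equation*}
\Delta(\theta) = \theta^\top\bigl(\mathbb{E}[v(x)\mid y=1] - \mathbb{E}[v(x)\mid y=0]\bigr).
\end{equation*}

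Next I will apply the hypothesis of Theorem~\ref{theorem} coordinatewise. Let $\mu_1 - \mu_0 \succeq \epsilon\mathbf{1}$ and $\theta \succeq 0$. Every term $\theta_j(\mu_{1,j}-\mu_{0,j})$ is at least $\epsilon\theta_j$, so $\Delta(\theta) \ge \epsilon\,\theta^\top\mathbf{1}$. With the choice $\theta = \mathbf{1}/\sqrt{d}$ this gives $\Delta(\theta) \ge \epsilon\sqrt{d} > 0$, and because $\|\theta\|_2 = 1$ the margin is meaningful rather than an artifact of rescaling $\theta$.

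To turn this into a separation statement I will set the bias $b = \tfrac12\theta^\top(\mu_1+\mu_0)$, so the affine score is $\theta^\top v - b$. A short computation then gives $\mathbb{E}[\theta^\top v - b\mid y=1] \ge \epsilon\sqrt{d}/2$ and $\mathbb{E}[\theta^\top v - b\mid y=0] \le -\epsilon\sqrt{d}/2$. This means the expected signed margin $\mathbb{E}[(2y-1)(\theta^\top v - b)]$ is at least $\epsilon\sqrt{d}/2$ under each class.

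The theorem states $f_\theta(v)=\theta^\top v$ with no bias term. I will handle this in one of two ways. The first option is to read "positive expected margin separation" as the gap $\Delta(\theta)>0$, which holds without any bias. The second is to absorb $b$ by appending a constant coordinate to $v$, which is harmless because $f_\theta$ is only ever thresholded through $\tau$ in the detection rule.

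The main obstacle is making this definitional choice precise. The mathematics is a one-line linearity argument, but I will need to state clearly that the guarantee concerns expectations only. It does not give pointwise or high-probability separation; for that I would add a bounded-variance assumption and use Chebyshev or Hoeffding, exploiting the boundedness of $v$ established above.
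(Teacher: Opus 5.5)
Your proposal is correct and is essentially the paper's argument. The paper takes $\theta=\mathbf{1}$ and uses linearity of expectation with the coordinatewise hypothesis to get a gap of at least $d\epsilon$; your normalized choice $\theta=\mathbf{1}/\sqrt{d}$ gives $\epsilon\sqrt{d}$ by the same bound, and the boundedness check and bias-term discussion are harmless additions not needed for the stated claim.
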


\begin{proof}
    Let \(   \mu_1=\mathbb{E}[v(x)\mid y=1], \qquad \mu_0=\mathbb{E}[v(x)\mid y=0]  \) denote the class-conditional mean consistency profiles. 
    By assumption, we have \(  \mu_1-\mu_0 \succeq \epsilon\,\mathbf{1}. \)
    Consider the linear scoring function defined by \( \theta=\mathbf{1}\in\mathbb{R}^d. \)
    Then
    \begin{equation}
        \theta^\top\mu_1-\theta^\top\mu_0 = \sum_{k=1}^{d}(\mu_{1,k}-\mu_{0,k}) \ge d\epsilon >0.
    \end{equation}
    Consequently, there exists a constant \(\Delta=d\epsilon>0\) such that
    \begin{equation}
        \mathbb{E}[\theta^\top v(x)\mid y=1] - \mathbb{E}[\theta^\top v(x)\mid y=0]  \ge \Delta.
    \end{equation}
    Therefore, \(f_\theta(v)=\theta^\top v\) achieves a strictly positive expected separation margin between the two classes.
\end{proof}
The theorem confirms our stylistic consistency profile’s discriminative capability for computationally constrained multimedia content moderation pipelines without heavy neural networks.

Algorithm~\ref{alg:scp} explicitly decouples text paraphrase set generation, stylistic stability measurement, and final decision making into three modular stages.
In the first stage, the paraphrasing component explores a local semantic neighborhood of the multimodal input pair \((I,x)\) by generating a paraphrase set \(\mathcal{P}(x;I)=\{\hat{x}_k\}_{k=1}^{K}\), which provides multiple meaning-preserving variants for subsequent analysis.
In the second stage, the consistency extraction module maps each original-paraphrase pair \((x,\hat{x}_k)\) into a set of stability signals \(\mathbf{s}(x,\hat{x}_k)=\bigl[\mathbf{s}_D(x,\hat{x}_k),\, s_C(x,\hat{x}_k)\bigr]\), capturing both surface-level stylistic invariance and continuous semantic consistency.
In the final stage, all pairwise stability signals are aggregated into a fixed-dimensional consistency profile $ v(x)=\frac{1}{K}\sum_{k=1}^{K}\mathbf{s}(x,\hat{x}_k)$, which is independent of the paraphrase count \(K\) and thus enables efficient downstream inference without increasing model capacity.

\section{Experiments}
We evaluate the proposed \ourmethod from five complementary perspectives: in-domain detection performance, cross-domain generalization, robustness under adversarial and hybrid settings, interpretability of the learned stability profile, and sensitivity to the choice of semantic encoder.

\subsection{Experimental Setup}
\paragraph{Datasets.}
We evaluate \ourmethod on datasets selected from two complementary perspectives: widely-adopted benchmarks in the MGT detection literature for comparability with prior work, and datasets sourced from multimedia platforms to evaluate applicability in real-world multimedia content moderation scenarios. The former includes five conventional text domains and the large-scale \textit{RAID} benchmark; the latter includes \datasetvisualnews and \datasetmmimdb, where textual content is inherently paired with visual media.
% We evaluate \ourmethod on six datasets spanning diverse domains: Reuter news articles, student essays, HumanEval code, Yelp reviews, academic papers, and the large-scale \textit{RAID} benchmark.

\textbf{News Domain (\datasetnews).}
Using \textit{ChatGPT} (\textit{davinci})~\cite{verma2024ghostbuster}, we generate LLM-written news articles paired with human-written news from the \texttt{Reuter\_50\_50} dataset~\cite{houvardas2006n}.

\textbf{Essay Domain (\datasetessay).}
Human-written essays are sourced from IvyPanda~\cite{ivypanda}, a repository of student-written essays, while LLM-generated essays are produced using \textit{ChatGPT}~\cite{verma2024ghostbuster}.

\textbf{Code Domain (\datasetcode).}
We use the \datasetcode dataset~\cite{RAIDAR:mao2024detecting} for human-written code and use \textit{\GPTThreeFiveTurbo} to generate machine-written code.
This domain is included to test whether our method remains effective on highly structured content.

\textbf{Review Domain (\datasetreview).}
Using \textit{\GPTThreeFiveTurbo}~\cite{RAIDAR:mao2024detecting}, we generate LLM-written reviews and pair them with human-written reviews from Yelp~\cite{zhang2015character}.
% This domain is included to assess performance on informal, user-generated text with diverse expressions and colloquial phrasing.

\textbf{Paper Abstract Domain (\datasetpaper).}
We sample 500 human-written abstracts from ACL 2023, 2024 papers and use \textit{\GPTThreeFiveTurbo} to generate LLM-written paper abstracts.

\textbf{Visual News Domain (\datasetvisualnews).}
Human-written articles are from \textit{VisualNews}~\cite{liu2021visualnews}, collected from four major multimedia news outlets, with each article paired with a news image. LLM-generated counterparts are produced using \textit{\GPTThreeFiveTurbo}.

\textbf{Movie Description Domain (\datasetmmimdb).}
Human-written plot descriptions are from \textit{MM-IMDb}~\cite{arevalo2017gated}, a multimodal benchmark pairing movie posters with editorial synopses, while LLM-generated descriptions are produced using \textit{\GPTThreeFiveTurbo}.

\textbf{\textit{RAID} Benchmark.}
The official \textit{RAID} benchmark~\cite{dugan2024raid} includes over 10 million documents from 11 LLMs, testing generalization across generators, decoding strategies, and attack conditions.

\paragraph{Baselines.} We compare the proposed \ourmethod against several representative baseline detectors from multiple categories:

\textbf{GPTZero~\cite{GPTZero}.}
GPTZero is a commercial classifier that relies on handcrafted features and shallow syntactic heuristics. We use its official API for implementation.

\textbf{DetectGPT~\cite{mitchell2023detectgpt}.}
DetectGPT identifies LLM-generated content by examining changes in the curvature of log-probability under small input perturbations.

\textbf{Ghostbuster~\cite{verma2024ghostbuster}.}
Ghostbuster is a black-box detector that enforces cross-domain generalization by ensembling features from multiple weaker models. 

\textbf{RAIDAR~\cite{RAIDAR:mao2024detecting}.}
RAIDAR detects machine-generated texts by rewriting the input and comparing the resulting differences to identify discrepancies between human and LLMs.

\textbf{Fast-DetectGPT~\cite{bao2024fast}.}
Fast-DetectGPT is a more efficient zero-shot detector than DetectGPT, which approximates probability curvature signals via conditional sampling.

\textbf{R-Detect~\cite{R-Detect:song2025deep}.}
R-Detect applies a nonparametric kernel relative test to determine whether a test text is statistically closer to a human or a machine distribution.

\paragraph{Implementation Details.}
We use AUROC as the primary metric for ranking quality, and we also report the best F1 score obtained by sweeping the decision threshold. % on the validation set and applying it to the test set. 
For \textit{Fast-DetectGPT}, \textit{Binoculars}, \textit{R-Detect}, and \textit{DetectGPT}, we use their official implementations but re-evaluate them under a common protocol: AUROC is computed from raw scores, and the F1 score is obtained via threshold sweeping on the same split as our method.
For \textit{Ghostbuster}, the original work relies on \textit{GPT-Ada} and \textit{GPT-Davinci}, which are now deprecated. We replace them with \textit{\GPTThreeFiveTurbo} as drop-in substitutes, keeping all other hyperparameters unchanged.  
For \textit{DetectGPT}, we follow the original paper and use \textit{T5-3B} as the perturbation model.  
All LLM calls are made in a batchified manner to control variance across methods.
For \ourmethod, we use \textit{\GPTThreeFiveTurbo} as the paraphrase model $\mathcal{M}$ and SBERT as the default encoder $\xi$. % unless otherwise specified.  
The classifier $f$ is instantiated as a gradient-boosted tree with early stopping based on validation AUROC.  
To ensure fairness, F1 scores reported for all baselines in subsequent tables and figures are computed by threshold sweeping on the same held-out validation splits, and RAID configurations follow~\cite{R-Detect:song2025deep} unless otherwise noted.

% ----------------------------
\begin{table}[!t]
  \centering
  \scriptsize
  \setlength{\tabcolsep}{2pt} 
  \caption{Main detection AUROC of the LLM-generated content across \datasetnews, \datasetcode, \datasetessay,  \datasetreview, \datasetvisualnews, and \datasetmmimdb datasets. \textbf{Bold} indicates the best performance.}
  \resizebox{\linewidth}{!}{
  \begin{tabular}{lcccccccc}
    \toprule
    \textbf{Method} & \textbf{News} & \textbf{Code} & \textbf{Essay} & \textbf{Yelp} & \textbf{VisualNews} & \textbf{MM-IMDb} \\ % & \textbf{Avg} \\
    \midrule
    Entropy & 0.4246 & 0.4306 & 0.4808 & 0.4697& 0.4746 & 0.4358 \\ % & 0.4527 \\
    \rowcolor{gray!10} Rank & 0.6560 & 0.5348 & 0.6849 & 0.6819 & 0.5412 & 0.5292 \\ % & 0.6047 \\
    LogRank & 0.7438 & 0.5350 & 0.6758 & 0.5294 & 0.6712 & 0.5068 \\ % & 0.6102 \\
    \rowcolor{gray!10} RoBERTa-base & 0.7024 & 0.4217 & 0.6317 & 0.4723 & 0.5358 & 0.4076 \\ % & 0.5286 \\
    RoBERTa-large & 0.7301 & 0.4692 & 0.3325 & 0.4061 & 0.5674 & 0.4371 \\ % & 0.4904 \\
    \rowcolor{gray!10} DetectGPT & 0.8213 & 0.5267 & 0.6410 & 0.6342 & 0.8124 & 0.6058 \\ % & 0.6735 \\
    Ghostbuster & 0.6401 & 0.5378 & 0.5798 & 0.6691 & 0.7122 & 0.5684 \\ % & 0.6179 \\
    \rowcolor{gray!10} Fast-DetectGPT & 0.9486 & 0.6679 & 0.9206 & 0.6230 & 0.9173 & 0.6446 \\ % & 0.7885 \\
    RAIDAR & 0.8956 & 0.8173 & 0.9091 & 0.8616 & 0.9312 & 0.8246 \\ % & 0.8732 \\
    \rowcolor{gray!10} R-Detect & \textbf{0.9817} & 0.6490 & 0.7629 & 0.7121 & 0.9650 & 0.7048 \\ % & 0.7959 \\
    \midrule
    \rowcolor{creamyellow!50}
    \textbf{\ourmethod (Ours)} & 0.9356 & \textbf{0.8108} &  \textbf{0.9455} & \textbf{0.8718} & \textbf{0.9746} & \textbf{0.9576} \\ % & \textbf{0.9172} \\
    \bottomrule
    \end{tabular}
    }
    \label{table:Main-detection}
\end{table}
% ------------------------

\subsection{Detection Performance}

\paragraph{In-Domain Detection Performance.}
We first examine the core detection performance of LiSCP on both conventional text domains and multimedia-associated datasets.
\autoref{table:Main-detection} reports AUROC scores across six representative domains, spanning conventional text domains and multimedia content scenarios. 
\ourmethod achieves the best average performance and either outperforms or closely matches the strongest baseline in each individual domain.
Notably, \ourmethod demonstrates strong performance in domains with structurally diverse content such as \datasetessay and \datasetcode, significantly outperforming likelihood-based detectors and supervised classifiers that rely on surface-level statistics.
Furthermore, \ourmethod achieves particularly strong results on multimedia content domains, \datasetvisualnews and \datasetmmimdb, outperforming all baselines by a clear margin. This confirms the domain-agnostic nature of stylistic consistency as a detection signal, extending naturally to multimedia content moderation without additional adaptation.

\paragraph{Evaluation on the RAID Benchmark.}
\begin{table}[!t]
  \centering
  \small
  % \footnotesize
  \setlength{\tabcolsep}{5pt}
  \caption{Main detection AUROC on the RAID benchmark under six mixed data and adversarial attack configurations.}
  \label{table:RAID}
  \resizebox{\linewidth}{!}{
  \begin{tabular}{lcccccc}
    \toprule
    \textbf{Method} & \textbf{Mix1} & \textbf{Mix2} & \textbf{Mix3} & \textbf{Att1} & \textbf{Att2} & \textbf{Att3}\\ % & Writing
    \midrule
    DetectGPT & 0.6437 & 0.6632 & 0.4987 & 0.5931 & 0.5111 & 0.4554 \\
    \rowcolor{gray!10}
    Ghostbuster & 0.7013 & 0.6643 & 0.5388 & 0.6645 &  0.6465 & 0.6356 \\
    Fast-DetectGPT & 0.7596 & 0.7901 & 0.7620 & 0.7324 &  0.8410 & 0.7129 \\
    \rowcolor{gray!10}
    RAIDAR & 0.8090 & 0.6875 & 0.6500 & 0.7876 & 0.6476 & 0.7112 \\
    R-Detect & 
    0.8643 & 0.7656 & 0.7650 & 0.7855 & \textbf{0.7829} & 0.7163 \\
    \midrule
    \rowcolor{creamyellow!50} 
    \textbf{\ourmethod (Ours)} & 
    \textbf{0.8957} & \textbf{0.7958} & \textbf{0.7813} & \textbf{0.8268} & 0.7714 & \textbf{0.7608} \\
    \bottomrule
    \end{tabular}
   }
\end{table}
We further evaluate our method on the RAID benchmark, which consists of multiple generators, genres, decoding strategies, and adversarial attacks. 
Following prior work~\cite{dugan2024raid}, we test the model's ability to generalize across unseen generator-attack configurations.
As shown in~\autoref{table:RAID}, \ourmethod performs competitively with the strongest baselines in clean settings and often surpasses them when evaluated on attacked configurations.
In particular, detectors that rely heavily on raw likelihoods experience a sharp performance drop under paraphrasing and corruption. In contrast, our stability-based approach remains effective and provides informative signals even in the presence of such adversarial perturbations.

\subsection{Cross-Domain Generalization}
\begin{table*}[!ht]
    \centering
    \footnotesize
    \setlength{\tabcolsep}{5.5pt}
    \caption{Cross-domain generalization F1 score on ID-OOD splits. 
    Each row group represents a source domain (used for training), while each column shows the target domain used for evaluation. 
    \textbf{OOD-Avg} denotes the average F1 score on out-of-domain targets.
    \textbf{Bold} indicates the best performance, and \underline{underline} indicates the second best. }
    \label{figure:OOD}
    \resizebox{\linewidth}{!}{
    \begin{tabular}{l|*{2}{|*{5}{c}}}
    \toprule
    % \rowcolor{gray!30}
    % \multirow{2}{*}{\textbf{Method}} 
    & \multicolumn{5}{c}{\textbf{\texttt{Paper Abstract}}} & \multicolumn{5}{c}{\textbf{\texttt{HumanEval Code}}} \\
    % \cmidrule(lr){2-6} \cmidrule(lr){7-11} %\cmidrule(lr){12-16}
    \textbf{Method} & \datasetpaper & \textit{HumanEval} & \datasetessay & \datasetnews & \datasetreview & \datasetpaper & \textit{HumanEval} & \datasetessay & \datasetnews & \datasetreview \\ \midrule 
    Entropy & 
    56.75$\pm$0.984 & 28.14$\pm$0.719 & 36.52$\pm$0.507 & 48.14$\pm$0.813 & 41.83$\pm$1.231 & 
    26.10$\pm$0.753 & 58.23$\pm$1.093 & 35.68$\pm$0.724 & 36.17$\pm$0.148 & 40.22$\pm$1.062 \\ 
    \rowcolor{gray!10} Rank & 
    54.60$\pm$0.218 & 30.97$\pm$0.413 & 35.62$\pm$0.706 & 38.79$\pm$1.038 & 52.63$\pm$1.071 & 
    35.11$\pm$1.202 & 48.51$\pm$0.308 & 40.10$\pm$0.650 & 28.31$\pm$0.828 & 42.23$\pm$0.391 \\ 
    LogRank & 
    64.05$\pm$0.554 & 29.10$\pm$0.632 & 43.71$\pm$0.752 & 42.35$\pm$1.036 & 51.72$\pm$0.953 & 
    42.89$\pm$0.865 & 52.09$\pm$0.733 & 33.67$\pm$1.032 & 39.16$\pm$1.026 & 35.31$\pm$0.973 \\ 
    \rowcolor{gray!10} RoBERTa-base & 
    57.78$\pm$0.867 & 38.72$\pm$0.844 & 35.91$\pm$0.592 & 48.02$\pm$0.508 & 56.78$\pm$1.215 & 
    39.01$\pm$0.295 & 46.74$\pm$1.059 & 35.10$\pm$0.520 & 39.80$\pm$0.769 & 47.90$\pm$0.695 \\ 
    RoBERTa-large & 
    63.40$\pm$0.850 & \underline{41.48$\pm$0.921} & 50.81$\pm$0.905 & 55.27$\pm$0.596 & 61.82$\pm$1.010 & 
    39.15$\pm$0.935 & 58.10$\pm$0.995 & 47.92$\pm$1.003 & 39.58$\pm$1.256 & 52.09$\pm$0.958 \\ 
    \rowcolor{gray!10} DetectGPT & 
    83.33$\pm$0.983 & 33.64$\pm$1.271 & 53.96$\pm$0.592 & 64.14$\pm$0.915 & 59.24$\pm$1.037 & 
    41.39$\pm$0.854 & 56.23$\pm$0.470 & 69.55$\pm$0.792 & 46.95$\pm$0.849 & 47.05$\pm$0.384 \\
    Ghostbuster & 
    74.52$\pm$0.850 & 31.20$\pm$0.519 & 40.31$\pm$1.248 & 62.75$\pm$0.497 & \underline{64.81$\pm$0.588} & 39.65$\pm$0.550 & 61.27$\pm$0.681 & \textbf{71.90$\pm$0.679} & 46.71$\pm$1.260 & 51.38$\pm$1.054 \\ 
    Fast-DetectGPT & 
    \underline{86.50$\pm$1.320} & 36.48$\pm$0.942 & \underline{56.07$\pm$1.419} & 65.12$\pm$0.920 & 63.80$\pm$1.161 & 45.05$\pm$1.007 & 62.48$\pm$1.256 & 70.37$\pm$1.310 & 58.45$\pm$0.852 & 63.40$\pm$0.956 \\
    \rowcolor{gray!10} RAIDAR & 
    78.34$\pm$0.526 & 37.05$\pm$0.543 & 53.86$\pm$0.652 & \textbf{73.62$\pm$1.059} & 60.85$\pm$0.478 & 
    \underline{46.28$\pm$0.628} & 75.01$\pm$0.343 & 47.24$\pm$0.938 & \underline{59.42$\pm$0.609} & \textbf{79.51$\pm$ 1.203} \\
    R-Detect & 
    85.94$\pm$0.950 & 40.53$\pm$1.034 & 54.13$\pm$1.192 & 64.05$\pm$0.804 & 62.16$\pm$0.621 & 
    42.15$\pm$0.452 & \underline{76.50$\pm$0.925} & 69.72$\pm$0.806 & 56.10$\pm$0.763 & 64.38$\pm$0.629 \\
    \midrule
    \rowcolor{creamyellow!50}
    \textbf{\ourmethod (Ours)} & \textbf{91.54$\pm$0.340} & \textbf{45.14$\pm$0.462} & \textbf{67.16$\pm$0.215} & \underline{66.56$\pm$0.409} & \textbf{70.09$\pm$0.317} & \textbf{46.43$\pm$0.683} & \textbf{83.33$\pm$0.429} & \underline{70.79$\pm$0.155} & \textbf{62.70$\pm$0.517} & \underline{67.48$\pm$0.438} \\
    \midrule % \midrule
    % \rowcolor{gray!30}
    & \multicolumn{5}{c}{\textbf{\texttt{Reuter News}}} & \multicolumn{5}{c}{\textbf{\texttt{Yelp Review}}} \\
    % \cmidrule(lr){2-6} \cmidrule(lr){7-11}
    & \datasetpaper & \textit{HumanEval} & \datasetessay & \datasetnews & \datasetreview & \datasetpaper & \textit{HumanEval} & \datasetessay & \datasetnews & \datasetreview \\
    \midrule
    Entropy & 36.08$\pm$0.845 & 39.51$\pm$0.573 & 23.58$\pm$0.794 & 46.70$\pm$0.440 & 38.95$\pm$0.125 & 
    36.06$\pm$0.607 & 27.85$\pm$1.194 & 37.80$\pm$1.538 & 37.13$\pm$0.789 & 48.10$\pm$0.949 \\
    \rowcolor{gray!10} Rank 
    & 42.79$\pm$1.454 & 31.06$\pm$0.644 & 39.18$\pm$1.365 & 49.40$\pm$1.573 & \underline{47.80$\pm$0.760} & 
    32.88$\pm$0.413 & 49.87$\pm$0.654 & 40.56$\pm$0.743 & 35.63$\pm$1.182 & 40.02$\pm$0.119 \\
    LogRank & 49.07$\pm$1.127 & 45.97$\pm$0.198 & 48.33$\pm$1.103 & 56.15$\pm$0.327 & 46.36$\pm$0.709 & 29.62$\pm$0.983 & 44.50$\pm$0.352 & 32.01$\pm$0.915 & 34.22$\pm$0.387 & 53.25$\pm$0.726 \\
    \rowcolor{gray!10} RoBERTa-base & 50.27$\pm$1.160 & 41.50$\pm$0.582 & 36.16$\pm$0.628 & 54.59$\pm$1.094 & 31.29$\pm$1.069 & 43.76$\pm$0.539 & 50.66$\pm$1.466 & 45.30$\pm$1.423 & 45.64$\pm$1.306 & 62.62$\pm$1.416 \\
    RoBERTa-large & 57.32$\pm$1.356 & \textbf{48.03$\pm$1.752} & 45.71$\pm$1.720 & 63.25$\pm$1.349 & 41.63$\pm$1.218 & \textbf{48.13$\pm$0.720} & 51.18$\pm$1.206 & 32.36$\pm$1.023 & 59.31$\pm$1.662 & 70.23$\pm$1.230 \\
    \rowcolor{gray!10} DetectGPT & 52.57$\pm$0.765 & 32.19$\pm$0.846 & 56.44$\pm$1.363 & 86.70$\pm$0.838 & 37.09$\pm$0.728 & 41.63$\pm$0.745 & 37.51$\pm$0.545 & 59.32$\pm$1.150 & 61.08$\pm$1.344 & 68.10$\pm$1.223 \\
    Ghostbuster & 58.71$\pm$1.388 & 39.02$\pm$0.411 & 60.61$\pm$0.596 & 82.56$\pm$0.386 & 41.30$\pm$1.749 & 40.12$\pm$0.313 & 32.10$\pm$0.267 & 57.42$\pm$1.451 & 44.28$\pm$0.651 & 66.34$\pm$0.268 \\
    Fast-DetectGPT & 62.75$\pm$1.107 & 45.73$\pm$1.216 & 68.14$\pm$0.821 & 81.73$\pm$0.593 & 45.18$\pm$0.818 & 46.60$\pm$0.988 & 50.22$\pm$1.361 & 66.80$\pm$0.863 & 62.08$\pm$0.390 & 68.03$\pm$1.103 \\
    \rowcolor{gray!10} RAIDAR & 60.37$\pm$0.441 & 33.07$\pm$1.711 & 59.74$\pm$0.300 & \textbf{89.67$\pm$0.372} & 32.23$\pm$0.759 & 43.09$\pm$1.738 & 42.31$\pm$1.795 & \underline{67.79$\pm$1.651} & 47.36$\pm$0.397 & 71.88$\pm$0.227 \\
    R-Detect & \underline{64.15$\pm$0.818} & 45.42$\pm$1.198 & \underline{71.09$\pm$1.031} & 77.42$\pm$1.067 & \underline{46.31$\pm$0.992} & 44.54$\pm$0.958 & \underline{52.01$\pm$0.810} & 67.51$\pm$0.605 & \underline{63.15$\pm$0.967} & \underline{72.40$\pm$0.585} \\
    \midrule
    \rowcolor{creamyellow!50} 
    \textbf{\ourmethod (Ours)} & \textbf{69.81$\pm$0.336} & \underline{46.35$\pm$0.914} & \textbf{82.61$\pm$0.847} & \underline{88.21$\pm$0.845} & \textbf{49.17$\pm$0.558} & \underline{46.67$\pm$0.672} & \textbf{53.43$\pm$0.634} & \textbf{69.03$\pm$0.316} & \textbf{69.88$\pm$0.538} & \textbf{80.83$\pm$0.305} \\ % OOD-Avg
    \midrule % \midrule   
    % \rowcolor{gray!30} 
    & \multicolumn{5}{c}{\textbf{\texttt{Student Essay}}} & \multicolumn{5}{c}{\textbf{\texttt{OOD-Avg}}}  \\
    % \cmidrule(lr){2-6} \cmidrule(lr){7-11}
    & \datasetpaper & \textit{HumanEval} & \datasetessay & \datasetnews & \datasetreview & \datasetpaper & \textit{HumanEval} & \datasetessay & \datasetnews & \datasetreview \\
    \midrule
    Entropy & 48.15$\pm$0.565 & 26.93$\pm$1.397 & 36.68$\pm$0.753 & 41.05$\pm$1.102 & 31.16$\pm$0.766 
    & 40.63$\pm$0.751 & 36.13$\pm$0.995 & 34.05$\pm$0.863 & 41.84$\pm$0.658 & 40.05$\pm$0.827 \\
    \rowcolor{gray!10} Rank 
    & 51.23$\pm$0.875 & 26.18$\pm$0.540 & 51.75$\pm$0.310 & 43.22$\pm$0.282 & 45.61$\pm$0.472 
    & 43.32$\pm$0.832 & 37.32$\pm$0.512 & 41.44$\pm$0.755 & 39.07$\pm$0.981 & 45.66$\pm$0.563 \\
    LogRank & 48.86$\pm$0.527 & 31.23$\pm$1.202 & 59.10$\pm$1.080 & 46.77$\pm$1.800 & 41.67$\pm$0.319 
    & 46.90$\pm$0.811 & 40.58$\pm$0.623 & 43.36$\pm$0.976 & 43.73$\pm$0.915 & 45.66$\pm$0.736 \\
    \rowcolor{gray!10} RoBERTa-base & 58.55$\pm$0.555 & 32.32$\pm$1.379 & 45.14$\pm$0.663 & 34.26$\pm$0.380 & \underline{50.52$\pm$0.842}
    & 49.87$\pm$0.683 & 41.99$\pm$1.066 & 39.52$\pm$0.765 & 44.46$\pm$0.811 & 49.82$\pm$1.047 \\
    RoBERTa-large & 53.10$\pm$1.145 & 28.66$\pm$1.066 & 61.93$\pm$0.583 & 37.82$\pm$1.806 & 47.04$\pm$0.736 
    & 52.22$\pm$1.001 & 45.49$\pm$1.188 & 47.75$\pm$1.047 & 51.05$\pm$1.334 & 54.56$\pm$1.030 \\
    \rowcolor{gray!10} DetectGPT & 50.41$\pm$0.976 & 31.40$\pm$0.761 & 70.56$\pm$1.597 & 64.34$\pm$1.043 & 41.54$\pm$0.590 
    & 53.87$\pm$0.865 & 38.19$\pm$0.779 & 61.97$\pm$1.099 & 64.64$\pm$0.998 & 50.60$\pm$0.792 \\
    Ghostbuster & 63.97$\pm$0.404 & \underline{35.63$\pm$1.330} & 71.47$\pm$1.503 & 70.18$\pm$0.874 & 38.73$\pm$0.357 
    & 55.39$\pm$0.701 & 39.84$\pm$0.642 & 60.34$\pm$1.095 & 61.30$\pm$0.734 & 52.51$\pm$0.803 \\
    Fast-DetectGPT & 55.29$\pm$1.107 & 34.16$\pm$0.983 & 73.52$\pm$1.210 & 66.59$\pm$0.792 & 50.06$\pm$0.385 & 59.24$\pm$1.106 & 45.81$\pm$1.152 & 66.98$\pm$1.125 & 66.79$\pm$0.709 & 58.09$\pm$0.885 \\
    \rowcolor{gray!10} RAIDAR & 64.75$\pm$1.432 & 34.69$\pm$0.660 & 72.02$\pm$0.994 & \underline{74.88$\pm$1.546} & 36.97$\pm$0.323 
    & 58.57$\pm$0.953 & 44.43$\pm$1.010 & 60.13$\pm$0.907 & \underline{68.99$\pm$0.797} & 56.29$\pm$0.598 \\
    R-Detect & \underline{66.89$\pm$0.271} & 33.06$\pm$0.939 & \underline{76.44$\pm$1.095} & 72.33$\pm$0.426 & 49.05$\pm$0.714 & \underline{60.73$\pm$0.690} & \underline{49.50$\pm$0.981} & \underline{67.78$\pm$0.946} & 66.61$\pm$0.806 & \underline{58.86$\pm$0.708} \\
    \midrule
    \rowcolor{creamyellow!50} 
    \textbf{\ourmethod (Ours)} & 
    \textbf{71.31$\pm$0.892} & \textbf{38.52$\pm$0.334} & \textbf{89.27$\pm$1.532} & \textbf{76.07$\pm$0.930} & 
    \textbf{53.15$\pm$0.570} & \textbf{65.15$\pm$0.585} % $_{\textcolor{black}{\uparrow6.5}}$
    & \textbf{53.35$\pm$0.555} % $_{\textcolor{black}{\uparrow7.8}}$
    & \textbf{75.77$\pm$0.613} % $_{\textcolor{black}{\uparrow13.8}}$
    & \textbf{72.68$\pm$0.648} % $_{\textcolor{black}{\uparrow3.6}}$
    & \textbf{64.14$\pm$0.438} \\ % $_{\textcolor{black}{\uparrow7.8}}$
    \bottomrule
    \end{tabular}
    }
\end{table*}
While the in-domain results demonstrate strong discriminative ability, practical deployment also requires detectors to transfer across domains with different writing styles and content distributions. We therefore next evaluate cross-domain generalization.
We perform experiments using ID-OOD splits, where detectors are trained on a source domain and evaluated on unseen target domains.
As summarized in~\autoref{figure:OOD}, the results show that \ourmethod consistently achieves the highest OOD-Avg F1 score across all source domains, highlighting its strong ability to generalize to new, unseen domains.
When trained on formal domains such as \datasetpaper and \datasetessay, our method generalizes well to informal targets like \datasetnews and \datasetreview, demonstrating its versatility across different writing styles. 
Similarly, when trained on casual domains like \datasetreview, it maintains stable performance even when tested on more formal domains like \datasetpaper and \datasetessay.
Although cross-domain detection remains a challenging task for all methods, the proposed stability-based approach consistently outperforms other methods, showing stronger transferability under domain shifts.
Notably, we observe that machine-generated content consistently exhibits higher mean values than human-written content across each component of the consistency profile. 
This observation aligns with the consistency dominance assumption stated in~\autoref{theorem}, which posits a coordinate-wise separation of class-conditional expectations in the consistency profile space.

\subsection{Robustness Analysis}
Beyond domain transfer, a robust detector should also remain reliable under post-editing and mixed-authorship settings. 
We therefore further evaluate \ourmethod under adversarial perturbations and hybrid human-LLM composition.
\paragraph{Robustness to Adversarial Manipulation.}
\begin{figure}[!t]
    \centering
    \includegraphics[width=\linewidth]{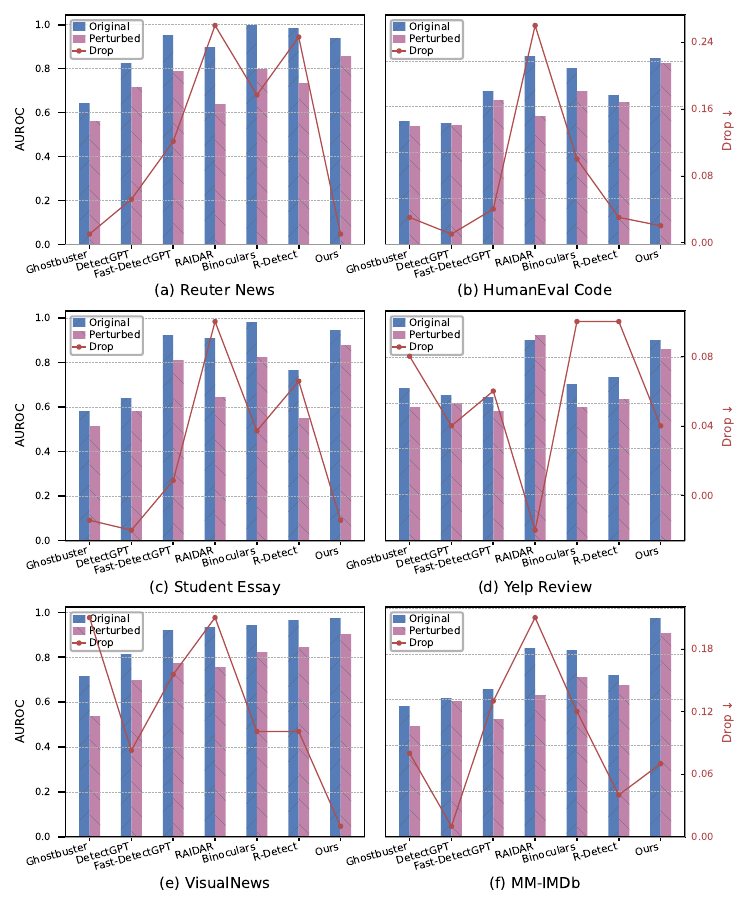}
    \caption{Evaluation of detection performance degradation under adversarial perturbations: Comparative analysis of original AUROC, AUROC after perturbation, and relative drop rate across \datasetnews, \datasetcode, \datasetessay,  \datasetreview, \datasetvisualnews, and \datasetmmimdb datasets, highlighting robustness under word-level attacks.
    }
    \label{figure:perturbation}
\end{figure}
To assess robustness against meaning-preserving edits, we adopt TextAttack-style perturbations and introduce character swaps/insertions, synonym-level word substitutions, and sentence-level paraphrases, with a maximum modification rate of 20\% tokens per sample. 
All detectors are trained on clean data and tested directly on perturbed sets without adaptation, reflecting real-world deployment where edited or partially rewritten text is common. 
As illustrated in~\autoref{figure:perturbation}, we report original AUROC, post-perturbation AUROC, and the relative performance drop. 
Across domains, \ourmethod consistently incurs substantially smaller degradation than likelihood-based or probability–dependent baselines. 
Notably, detectors such as DetectGPT and Ghostbuster exhibit pronounced drops, especially in \datasetreview and \datasetcode, where perturbations disrupt probability curvature or token statistics, whereas \ourmethod maintains high accuracy with only minor fluctuations. 
This pattern holds consistently across both conventional text domains and multimedia content domains (\datasetvisualnews and \datasetmmimdb), demonstrating that stylistic consistency remains a reliable signal regardless of content modality.
% In several cases (e.g., \datasetreview), our performance remains nearly unchanged after perturbation, implying that consistency profiling effectively filters out superficial textual edits and preserves authorship cues beyond raw token-form information. 
% These findings suggest that our stability-oriented representation captures deeper stylistic invariants rather than surface features., and hence remains reliable when content undergoes light rewriting or noisy edits—an ability crucial for deployment scenarios involving user post-editing, LLM-based paraphrasing, or content laundering. 
This addresses our goal of robust detection (\textit{RQ1}), confirming the resilience of stylistic consistency under adversarial manipulation.

\paragraph{Robustness to Hybrid Human-LLM Composition.}
\begin{figure}[!t]
    \centering
    \includegraphics[width=\linewidth]{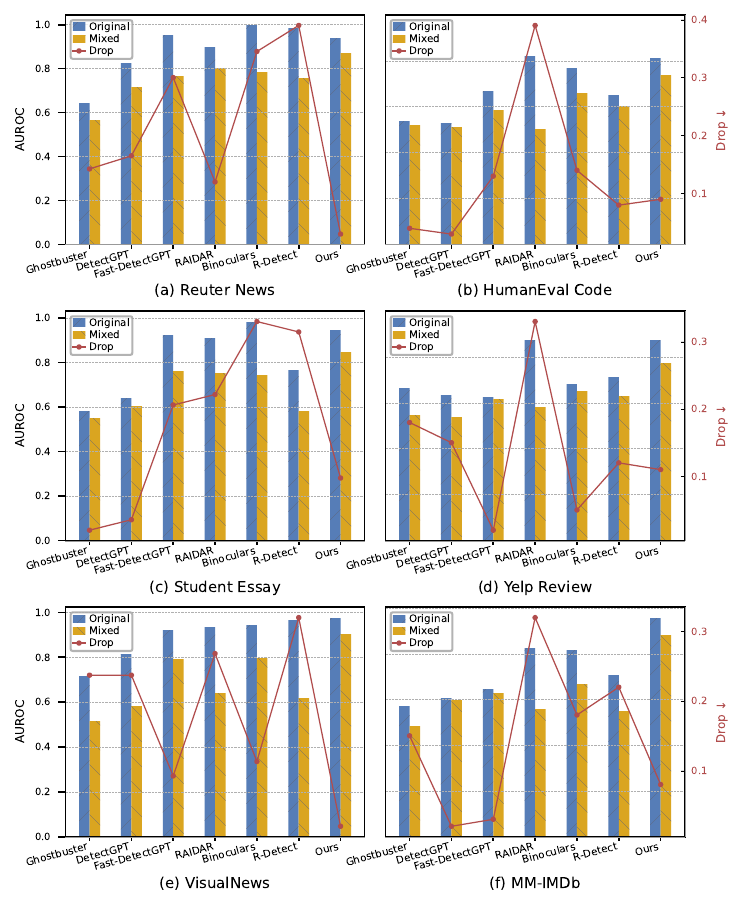}
    \caption{Evaluation of detection performance degradation under adversarial mixed text: Comparative analysis of original AUROC, AUROC after mixing, and relative performance drop across \datasetnews, \datasetcode, \datasetessay,  \datasetreview, \datasetvisualnews and \datasetmmimdb datasets, characterizing stability under hybrid human–LLM composition.}
    \label{figure:mixed}
\end{figure}
Beyond local perturbations, we further evaluate robustness under global content mixing, where human-written and LLM-generated segments are interleaved within a single document. 
Following standard protocol, we construct hybrid samples by concatenating segments at a 4:1 ratio and assign labels based on dominant authorship. 
This setup reflects situations where users revise LLM outputs or insert generated paragraphs into human writing. 
The results in~\autoref{figure:mixed} show that \ourmethod continues to outperform all baselines on mixed inputs and maintains the smallest AUROC drop across domains. 
While detectors relying on perplexity or representation distance degrade significantly under blending, often losing the authorship signal once machine spans are surrounded by human context, our stylistic consistency profile remains discriminative even without span-level supervision. 
For example, on \datasetreview, most baselines experience severe degradation, while \ourmethod drops only modestly, indicating strong resilience to human–AI hybridization. 
%
% These observations demonstrate that consistency across paraphrased variants remains a stable indicator of machine authorship even when machine-generated text is partially masked or diluted by human content. 
% Combined with perturbation experiments, this confirms that \ourmethod supports detection not only under local edits but also under mixed scenarios, addressing \textit{RQ2} and highlighting the practicality in real-world moderation, academic integrity checking, and hybrid writing workflows.

Combined with perturbation experiments, this confirms that \ourmethod supports detection not only under local edits but also under mixed scenarios, addressing 
\textit{RQ2} and highlighting its practicality across diverse real-world multimedia content moderation scenarios.

\subsection{Interpretability Analysis}
Besides the robustness of \ourmethod, we next analyze whether the learned stability profile also yields an interpretable feature-space structure for more transparency.
\paragraph{Visualizing Explainability through UMAP}
\begin{figure}[!t]
    \centering
    \footnotesize
    \setlength{\tabcolsep}{2.0pt}
    \includegraphics[width=\linewidth]{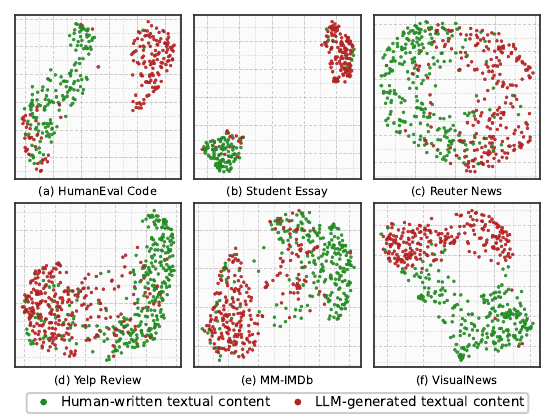}
    \caption{Explainable in-domain detection visualization: UMAP projections of stability signatures separating human-written and LLM-generated textual content across six domains (\datasetreview, \datasetnews, \datasetcode, \datasetessay, \datasetvisualnews, and \datasetmmimdb).}
    \label{figure:explainability-UMAP}
\end{figure}
To highlight the explainability of our proposed method, \ourmethod, we utilize UMAP~\cite{mcinnes2020umapuniformmanifoldapproximation} to visualize the distribution of feature vectors extracted from various datasets. 
As shown in~\autoref{figure:explainability-UMAP}, UMAP provides a two-dimensional projection that facilitates the analysis of high-dimensional data, offering a feature-space sanity check on how our stability profile organizes texts from different sources rather than serving as a decision tool.
As shown in~\autoref{figure:explainability-UMAP}, the red points represent machine-generated content and the green points correspond to human-written content. 
Across all domains, the two groups are clearly separated, with limited overlap in the projected space. Notably, \datasetessay exhibits a particularly clean separation, indicating that \ourmethod captures style-based differences effectively even in relatively complex content domains. 
This pattern also extends to multimedia content domains: \datasetmmimdb and \datasetvisualnews both display clear cluster boundaries, suggesting that the learned consistency signatures remain informative across heterogeneous settings and supporting the generalizability of \ourmethod in real-world content moderation scenarios.

\paragraph{Quantitative Evaluation}
\begin{figure}[!t]
    \centering
    \footnotesize
    \includegraphics[width=\linewidth]{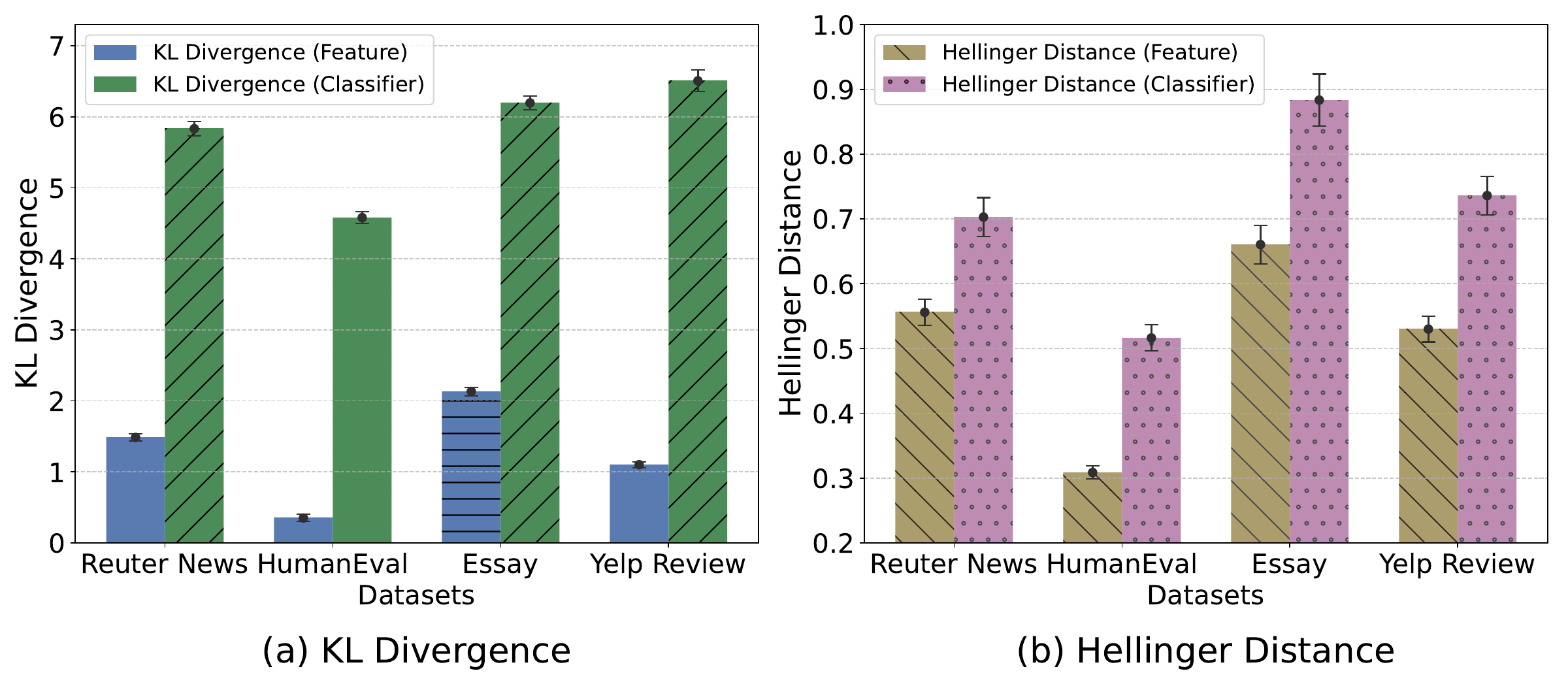}
    \caption{Explainability experiments using KL Divergence and Hellinger Distance for feature and classifier-based methods across multiple datasets}
    \label{figure:explainability-metric}
\end{figure}
While the UMAP visualization in the previous section provided a qualitative view of the feature space separation, we quantify the model's ability to distinguish between human and machine content using two distribution divergence metrics: KL Divergence and Hellinger Distance.
These metrics measure the divergence between the distributions of human-written and LLM-generated textual content, providing a deeper understanding of how the model distinguishes between the two. 
The results are presented in \autoref{figure:explainability-metric}, where two methods are used: the first derives distributions from the feature $v(x)$ extracted  by \ourmethod, and the second uses classifier prediction scores $\sigma(\hat y)$.
The results consistently show that the classifier-based method (using $\sigma(\hat y)$) outperforms the feature-based method (using $v(x)$) in both KL Divergence and Hellinger Distance across all datasets. 
In particular, the KL Divergence is significantly higher for the distribution $\sigma (\hat y)$, especially in domains like \datasetreview, suggesting that the classifier captures more distinct stylistic differences. 
Similarly, higher Hellinger Distance values for the distribution $\sigma (\hat y)$ indicate clearer separability between human-written and LLM-generated textual content.
This gap between $v(x)$ and $\sigma(\hat{y})$ is expected: the stability profile provides a compact, interpretable representation, while the classifier $f$ further amplifies the separation by learning non-linear decision boundaries. Together, they support both transparent feature-space inspection and strong end-to-end detection performance.
These findings demonstrate that \ourmethod not only achieves high detection accuracy but also enhances transparency and explainability.

\subsection{Ablation Study}
Finally, to understand how much the framework depends on the specific continuous representation module, we conduct an ablation study over different semantic encoders.
Specifically, we conduct a component ablation study by replacing the continuous representation module $\xi$ with different feature extractors ranging from lightweight statistical vectors to deep contextual encoders. 
As shown in \autoref{table:pluggability}, \ourmethod~maintains consistently high performance across encoder choices. 
This evaluates the plug-and-play compatibility of \ourmethod and verifies that the stylistic consistency profiling remains effective when using weaker or stronger semantic embeddings.

Interestingly, even when using TF-IDF, a purely statistical and non-contextual representation without deep semantics, our method still delivers competitive results.
Replacing TF-IDF with pretrained distributed embeddings (Word2Vec/GloVe), \ourmethod obtains improved results through richer lexical features, suggesting that capturing global lexical semantics benefits profile construction. 
Among these semantic encoders, Contextual encoders (BERT, SBERT) are particularly effective, with SBERT providing the best overall average AUROC when plugged into \ourmethod. 
These results confirm that our mechanism is encoder-agnostic, providing flexibility for various deployment scenarios.

\begin{table}[!t]
  \centering
  \footnotesize
  \setlength{\tabcolsep}{4pt}
  \caption{AUROC results of replacing the semantic encoder with different representations across domains. 
  \ourmethod remains effective across feature extractors, confirming the plug-and-play capability.}
  \label{table:pluggability}
  \resizebox{\linewidth}{!}{
  \begin{tabular}{lcccc}
    \toprule
    \textbf{Encoder} & \textbf{Reuter News} & \textbf{HumanEval} & \textbf{Essay} & \textbf{Yelp Review} \\
    \midrule
    TF-IDF              & 0.8683 & 0.7709 & 0.9285 & 0.8114 \\
    Word2Vec/GloVe & 0.8864 & 0.8087 & 0.9369 & 0.8309 \\
    BERT                & 0.9300 & 0.7812 & 0.9478 & 0.8566 \\
    SBERT (Default) & \textbf{0.9356} & \textbf{0.8108} &  \textbf{0.9455} & \textbf{0.8718} \\
    \bottomrule
  \end{tabular}
  }
\end{table}

\section{Conclusion}
In this work, we proposed \ourmethod, a lightweight framework for detecting LLM-generated textual content through stylistic consistency profiling.
By combining discrete stylistic signals with continuous semantic consistency, \ourmethod provides a compact representation that remains effective under paraphrase-based variation and multimodal-guided rewriting.
Experiments across conventional and multimedia-associated domains show that \ourmethod achieves strong detection performance and robust behavior under adversarial perturbations and hybrid human-LLM composition.
It also consistently outperforms existing methods in cross-domain evaluations, demonstrating strong generalization under distribution shift.
Further analyses show that \ourmethod remains effective across different semantic encoders, supporting flexible deployment.

\setcitestyle{numbers,sort&compress}
\bibliographystyle{unsrtnat}
\bibliography{paper}

\appendix

\end{document}